\documentclass[11pt]{article}
\usepackage[margin=1in]{geometry}
\usepackage{amsmath,amssymb,amsthm,bm,graphicx,booktabs,microtype}
\usepackage[numbers,sort&compress]{natbib}
\usepackage{xcolor}
\usepackage{placeins}
\usepackage{hyperref}
\hypersetup{colorlinks=true,linkcolor=black,citecolor=black,urlcolor=black}
\newtheorem{proposition}{Proposition}

\newcommand{\uale}{u_{\mathrm{ale}}}
\newcommand{\uepi}{u_{\mathrm{epi}}}
\newcommand{\umeas}{u_{\mathrm{meas}}}
\newcommand{\udiff}{u_{\mathrm{diff}}}
\newcommand{\Vpred}{V_{\mathrm{pred}}}
\newcommand{\abar}{\bar\alpha}
\newcommand{\Cond}{\mathcal{C}}

\title{PEEL-DDPM: Physics-Enabled Evidential Learning for the Denoising Diffusion Probabilistic Model}
\author{Ge Wang\\Biomedical Imaging Center, Rensselaer Polytechnic Institute\\Troy, New York, USA}

\begin{document}
\maketitle

\begin{abstract}
Normal--inverse-gamma (NIG) regression is not identifiable from its marginal Student-$t$ likelihood: the likelihood determines three combinations of four NIG parameters and leaves a one-dimensional $(\beta,\nu)$ fiber. This paper formulates physics-enabled evidential learning (PEEL) for a representative generative AI model commonly referred to as denoising diffusion probabilistic model (DDPM), denoted as PEEL-DDPM. A conventional measurement-conditioned DDPM is first trained only by $\varepsilon$-MSE and then frozen. A complete reverse trajectory produces a final reconstruction $\tilde x_0$, which is paired during training with the known object $x_0$ to form the physically interpretable image-domain residual $r=x_0-\tilde x_0$. A final-image evidential network fits a zero-location Student-$t$ model to this residual and learns only the identifiable coordinates $(\alpha,c)$, leaving the generated image itself untouched. Repeated scanner-noise realizations and repeated complete diffusion trajectories provide a nested Monte Carlo measurement of the final-image aleatoric variance, split into scanner-induced and sampler-induced components. Because the NIG fiber has exactly one missing scalar, the measured total $\uale=\umeas+\udiff$ uniquely identifies $(\beta,\nu)$ and yields $\Vpred=\umeas+\udiff+\uepi$. 
We give a sequential training design with no cross-loss coefficient. A feasibility experiment shows that the final-image formulation is learnable: on eight held-out objects, the Student-$t$ model attains empirical central-interval coverages of 49.3\%, 80.3\%, 90.1\%, and 95.6\% for nominal 50\%, 80\%, 90\%, and 95\% intervals, respectively; the mean squared residual is 0.967 times the mean predicted variance; and a single-image aleatoric head reaches pooled Spearman $\rho=0.785$ against an independent nested reference. A five-dose physical-channel stress test further gives a log--log dose slope of $-1.15$ for scanner-induced variance versus $-0.01$ for sampler-induced variance, while a higher-budget $96\times6$ nested reference attains median within-image split-half reliability $\rho=0.750$ across eight held-out objects.
\end{abstract}

\noindent\textbf{Keywords:}
Physics-enabled evidential learning (PEEL); denoising diffusion probabilistic model (DDPM);
normal-inverse-gamma regression; identifiability; aleatoric uncertainty; computed tomography (CT).

\section{Introduction}
\label{sec:intro}
Normal--inverse-gamma (NIG) regression \citep{amini2020deep} associates a predictive hierarchy with its output. After marginalization, however, the Student-$t$ likelihood depends on four NIG parameters only through three identifiable coordinates. The likelihood is therefore constant along a one-dimensional fiber, and the usual aleatoric/epistemic split is not identified. Evidence regularizers \citep{amini2020deep,meinert2023unreasonable}, reference priors \citep{malinin2018prior}, and latent hierarchies \citep{wang2026elvae} can select a representative on this fiber, but selection is not identification. PEEL \citep{wang2026peel} instead supplies an independent physical measurement of the missing coordinate and recovers $(\beta,\nu)$ algebraically.

Diffusion models create a tempting route to such a measurement because the forward process
\begin{equation}
x_t=\sqrt{\abar_t}\,x_0+\sqrt{1-\abar_t}\,\varepsilon,
\qquad \varepsilon\sim\mathcal N(0,I),
\label{eq:forward}
\end{equation}
is known and resamplable \citep{ho2020ddpm,song2021score}. 
The present work incorporates evidential learning by focusing on the error in the final reconstructed image. At training time the ground-truth object $x_0$ is known. After the complete reverse chain has produced $\tilde x_0$, the residual
\begin{equation}
r=x_0-\tilde x_0
\label{eq:imageerrorintro}
\end{equation}
is available directly, in image units, and already contains the cumulative effect of every reverse step. Therefore, it is used as the endpoint to which we attach the evidential model.

This distinction is important. Any \emph{single-step} $\varepsilon$ prediction has only an incremental contribution to the final $x_0$ estimate. By contrast, the final sample after the entire reverse path depends on the sequence of initialization, score estimates, and data conditioning. Its error reflects the collective effect of all per-step residuals. The proposed formulation therefore keeps $\varepsilon$-MSE exactly where it belongs---training the DDPM score---and moves Student-$t$ evidential learning to the final image.

Measurement conditioning supplies the physical structure needed by PEEL. We use a CT measurement
\begin{equation}
y=Ax_0+\eta,\qquad \eta\sim p_\eta(\cdot;x_0,I_0),
\label{eq:scan}
\end{equation}
where the scanner noise is object dependent because photon statistics depend on attenuation along each ray. While repeating the scan changes $\eta$, repeating the diffusion reconstruction changes the complete sampling trajectory. These are two known, resamplable sources of final-image variation. A nested law of total variance separates their contributions into scanner-induced and sampler-induced components. Their sum is used as the one independently measured scalar required to identify the NIG fiber.

Figure~\ref{fig:workflow} summarizes our overall design. The denoising diffusion probabilistic model
(DDPM) is trained conventionally and frozen. A second network fits $(\alpha,c)$ from final-image residuals. A third, sequentially trained head learns the two measured channel variances from the final reconstruction. The losses are never added, so no evidence regularizer or cross-loss coefficient is required. The result is a final uncertainty summary:
\[
\Vpred=\umeas+\udiff+\uepi,
\]
where the first term answers how much another scan can change the reconstruction, the second how much another diffusion trajectory can change it, and the third term is the NIG epistemic component under the PEEL operational definition.

This paper describes the formulation, a minimal feasibility experiment, a targeted multi-dose validation of the physical channel decomposition, and further research directions. Section~\ref{sec:method} develops the methodology, Section~\ref{sec:design} reports the pilot data, and Section~\ref{sec:disc} discusses relevant issues and concludes the paper.

\section{Methodology}
\label{sec:method}

\subsection{Conditional DDPM}
Let $x_0\in\mathbb R^{|\Omega|}$ be the object, $A$ the scanner forward operator, and $I_0$ the dose. We use the deterministic conditioning image
\begin{equation}
\Cond=\mathrm{FBP}(y),
\label{eq:condition}
\end{equation}
although any suitable function of $y$ could be substituted. A conditional DDPM predicts the forward noise by
\[
\hat\varepsilon_\theta=F_\theta(x_t,t,\Cond,I_0)
\]
and is trained by the standard loss
\begin{equation}
\mathcal L_{\mathrm{score}}(\theta)
=\mathbb E\big\|\varepsilon-F_\theta(x_t,t,\Cond,I_0)\big\|^2.
\label{eq:scoreloss}
\end{equation}
After training, $\theta=\theta^\star$ is frozen. Let $\xi$ denote all randomness of a complete reverse trajectory, including the initial state and ancestral Gaussian innovations. The final reconstruction is
\begin{equation}
\tilde x_0=G_{\theta^\star}(\Cond,I_0;\xi),
\label{eq:finalgenerator}
\end{equation}
where $G$ means the full sequence $x_T\rightarrow x_{T-1}\rightarrow\cdots\rightarrow x_0$.

\subsection{Final-image residual}
At a fixed diffusion level $t$, the usual one-step clean-image estimate is
\begin{equation}
\hat x_0^{(t)}=
\frac{x_t-\sqrt{1-\abar_t}\,\hat\varepsilon_\theta}{\sqrt{\abar_t}}.
\label{eq:onestepx0}
\end{equation}
Substituting Eq.~\eqref{eq:forward} gives
\begin{equation}
\hat x_0^{(t)}-x_0
=\sqrt{\frac{1-\abar_t}{\abar_t}}\,
(\varepsilon-\hat\varepsilon_\theta).
\label{eq:affineequiv}
\end{equation}
Thus, replacing the per-step noise residual by the corresponding one-step image residual changes only a known scale factor. It does not create a new statistical target.

The final reconstruction in Eq.~\eqref{eq:finalgenerator} is different. It depends recursively on all reverse transitions and on the measurement condition. Let us define the final-image residual
\begin{equation}
r=x_0-\tilde x_0.
\label{eq:finalresid}
\end{equation}
There is no single scalar $a_t$ such that $r=a_t(\varepsilon-\hat\varepsilon_t)$ for one selected step. The complete path is already integrated into $\tilde x_0$. We therefore use Eq.~\eqref{eq:finalresid}, not Eq.~\eqref{eq:affineequiv}, as the evidential target.
Note that setting $t=0$ does not recover the final-image residual either: by definition,
$\bar{\alpha}_0=1$, so the one-step estimator collapses trivially to the
ground-truth image $x_0$, yielding zero residual rather than
$x_0-\tilde{x}_0$.

\subsection{Final-image NIG model}
At each pixel $i$, the generated value $\tilde x_{0,i}$ is the conventional output of the DDPM model. We write the NIG hierarchy
\begin{align}
\sigma_i^2 &\sim \mathrm{InvGamma}(\alpha_i,\beta_i),\\
\mu_i\mid\sigma_i^2 &\sim \mathcal N\!\left(\tilde x_{0,i},\frac{\sigma_i^2}{\nu_i}\right),\\
x_{0,i}\mid\mu_i,\sigma_i^2 &\sim \mathcal N(\mu_i,\sigma_i^2),
\label{eq:nig}
\end{align}
with $\alpha_i>1$, $\beta_i>0$, and $\nu_i>0$. Marginally,
\begin{equation}
r_i=x_{0,i}-\tilde x_{0,i}
\sim t_{2\alpha_i}\!\left(0,\frac{c_i}{\alpha_i}\right),
\qquad
c_i=\beta_i\left(1+\frac1{\nu_i}\right).
\label{eq:marginal}
\end{equation}
For simplicity, we assume negligible bias in the final-image estimate, so that the residual distribution is centered at zero.
The associated variances are
\begin{equation}
V_{\mathrm{pred},i}=\frac{c_i}{\alpha_i-1},\qquad
u_{\mathrm{ale},i}=\frac{\beta_i}{\alpha_i-1},\qquad
u_{\mathrm{epi},i}=\frac{\beta_i}{\nu_i(\alpha_i-1)},
\qquad V_{\mathrm{pred},i}=u_{\mathrm{ale},i}+u_{\mathrm{epi},i}.
\label{eq:decomp}
\end{equation}

\begin{proposition}[Final-image NIG fiber]
\label{prop:fiber}
With the location fixed to $\tilde x_{0,i}$, the marginal Student-$t$ likelihood depends on $(\beta_i,\nu_i)$ only through $c_i$. Hence, it is constant along
\begin{equation}
\beta_i(\nu_i)=\frac{c_i\nu_i}{1+\nu_i},\qquad \nu_i>0.
\label{eq:fiber}
\end{equation}
The final-image likelihood therefore identifies $(\alpha_i,c_i)$ but not the aleatoric/epistemic split.
\end{proposition}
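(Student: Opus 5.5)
I will prove the proposition by computing the marginal density of $r_i$ explicitly and reading off its dependence on the parameters. Fix pixel $i$ and drop the subscript. First I integrate out $\mu$: conditional on $\sigma^2$, the model gives $x_0=\mu+\epsilon_1$ with $\epsilon_1\sim\mathcal N(0,\sigma^2)$ and $\mu=\tilde x_0+\epsilon_2$ with $\epsilon_2\sim\mathcal N(0,\sigma^2/\nu)$, independent. Hence
\[
r=x_0-\tilde x_0\mid\sigma^2\sim\mathcal N\!\left(0,\sigma^2\left(1+\tfrac1\nu\right)\right).
\]
Writing $\tau^2=\sigma^2(1+1/\nu)$, the scaling property of the inverse-gamma law gives $\tau^2\sim\mathrm{InvGamma}(\alpha,\beta(1+1/\nu))=\mathrm{InvGamma}(\alpha,c)$. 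The step that carries the proof is the observation that $\beta$ and $\nu$ enter only through this rescaled scale parameter $c$.

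Next I integrate $\tau^2$ against the Gaussian. This is the standard normal–inverse-gamma scale-mixture computation:
\[
p(r)=\frac{\Gamma(\alpha+\tfrac12)}{\Gamma(\alpha)\sqrt{2\pi c}}\left(1+\frac{r^2}{2c}\right)^{-(\alpha+1/2)} .
\]
This is the density of $t_{2\alpha}(0,c/\alpha)$, which confirms Eq.~\eqref{eq:marginal}. The formula makes it explicit that the likelihood is a function of $(\alpha,c)$ alone.

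For the fiber, fix $(\alpha,c)$ and solve $c=\beta(1+\nu)/\nu$ for $\beta$. This gives exactly Eq.~\eqref{eq:fiber}. For each $\nu>0$ the resulting $\beta$ is positive, so the fiber is a curve inside the admissible parameter set, and the likelihood is constant along it.

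For identifiability of $(\alpha,c)$, I will show that the map $(\alpha,c)\mapsto t_{2\alpha}(0,c/\alpha)$ is injective.
\begin{itemize}
\item The tail exponent $2\alpha+1$ of the density determines $\alpha$.
\item Given $\alpha$, the value $p(0)\propto c^{-1/2}$ determines $c$.
\end{itemize}

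Finally, the split is not identified. Along the fiber,
\[
u_{\mathrm{ale}}=\frac{c\,\nu}{(1+\nu)(\alpha-1)},
\]
which is strictly increasing in $\nu$ and ranges over $(0,\Vpred)$. Meanwhile $\uepi=\Vpred-u_{\mathrm{ale}}$ takes the complementary values, while $\Vpred=c/(\alpha-1)$ stays fixed. Distinct points of the fiber therefore give different aleatoric/epistemic splits but the same likelihood.

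I do not expect a real obstacle. The only point needing care is the inverse-gamma scaling step: if $\sigma^2\sim\mathrm{InvGamma}(\alpha,\beta)$ then $k\sigma^2\sim\mathrm{InvGamma}(\alpha,k\beta)$. I will check this by a direct change of variables rather than assuming it.
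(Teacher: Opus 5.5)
Your proposal is correct and follows the same route as the paper. The paper gives no separate proof: it simply reads off from the marginal in Eq.~\eqref{eq:marginal} that $(\beta_i,\nu_i)$ enter only through $c_i=\beta_i(1+1/\nu_i)$, then solves for $\beta_i$. Your scale-mixture derivation of that marginal, the tail-exponent and peak-height argument for injectivity in $(\alpha_i,c_i)$, and the monotone sweep of $\uale$ over $(0,\Vpred)$ along the fiber supply details the paper leaves implicit.
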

This formulation has an important practical consequence: Student-$t$ NLL is never allowed to move the generated image. The DDPM output $\tilde x_0$ is fixed, and the NLL learns only the dispersion coordinates $(\alpha,c)$ from the real image residual. 

\subsection{Final-image channel decomposition}
Fix $x_0$ and $I_0$. Draw a scanner realization $\eta$, form $\Cond(\eta)$, and then draw a complete reverse trajectory $\xi$. The final residual is
\[
r_i(\eta,\xi)=x_{0,i}-G_{\theta^\star,i}(\Cond(\eta),I_0;\xi).
\]
Because $x_0$ is constant within this repeated experiment,
\begin{equation}
\mathrm{Var}_{\eta,\xi}[r_i\mid x_0]
=\mathrm{Var}_{\eta,\xi}[\tilde x_{0,i}\mid x_0].
\label{eq:erroroutputvar}
\end{equation}
Thus, the Monte Carlo teacher can be interpreted equivalently as variance of the final image or variance of the final image \emph{error}.

\begin{proposition}[Law of total variance]
\label{prop:nested}
For fixed $(x_0,I_0)$ and pixel $i$,
\begin{equation}
\underbrace{\mathrm{Var}_{\eta,\xi}[r_i]}_{u_{\mathrm{ale},i}}
=
\underbrace{\mathbb E_\eta\!\left[\mathrm{Var}_{\xi\mid\eta}[r_i]\right]}_{u_{\mathrm{diff},i}\ \text{(sampler-induced)}}
+
\underbrace{\mathrm{Var}_\eta\!\left[\mathbb E_{\xi\mid\eta}[r_i]\right]}_{u_{\mathrm{meas},i}\ \text{(scanner-induced)}}.
\label{eq:nested}
\end{equation}
\end{proposition}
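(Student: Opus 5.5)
The identity in Proposition~\ref{prop:nested} is the classical law of total variance applied to the scalar random variable $r_i(\eta,\xi)$. The only ingredients are a valid joint law for $(\eta,\xi)$ given $(x_0,I_0)$ and a finite second moment for $r_i$. The plan is to fix $(x_0,I_0)$ and the frozen parameters $\theta^\star$, and to regard $\eta$ and $\xi$ as random elements on a common probability space. The scanner noise $\eta$ has law $p_\eta(\cdot;x_0,I_0)$. The trajectory randomness $\xi$ is drawn afresh, so the conditional law of $\xi$ given $\eta$ is well defined; in practice $\xi$ is independent of $\eta$, but independence is not needed. Since $x_0$ is constant, $r_i=x_{0,i}-G_{\theta^\star,i}(\Cond(\eta),I_0;\xi)$ is a measurable function of $(\eta,\xi)$. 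Measurability follows because $\Cond=\mathrm{FBP}(y)$ is linear in $y$ and $G$ is a finite composition of the network $F_{\theta^\star}$ with Gaussian innovations.

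The key steps, in order, are as follows.
\begin{enumerate}
\item Assume $\mathbb E[r_i^2]<\infty$, which holds if the sampler output has a finite second moment. Define $m(\eta)=\mathbb E_{\xi\mid\eta}[r_i]$ and $s(\eta)=\mathrm{Var}_{\xi\mid\eta}[r_i]=\mathbb E_{\xi\mid\eta}[r_i^2]-m(\eta)^2$.
\item By the tower property, $\mathbb E_{\eta,\xi}[r_i]=\mathbb E_\eta[m(\eta)]$ and $\mathbb E_{\eta,\xi}[r_i^2]=\mathbb E_\eta\big[\mathbb E_{\xi\mid\eta}[r_i^2]\big]=\mathbb E_\eta[s(\eta)]+\mathbb E_\eta[m(\eta)^2]$.
\item Subtract $(\mathbb E_\eta[m(\eta)])^2$ from both sides of the second identity. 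This gives $\mathrm{Var}_{\eta,\xi}[r_i]=\mathbb E_\eta[s(\eta)]+\mathrm{Var}_\eta[m(\eta)]$, which is exactly Eq.~\eqref{eq:nested}.
\item By Eq.~\eqref{eq:erroroutputvar}, the same decomposition holds with $r_i$ replaced by $\tilde x_{0,i}$. Only the signs of the conditional means flip, and the variances are unchanged.
\end{enumerate}

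The main point needing care is the integrability step, not the algebra. A finite second moment of $G_{\theta^\star,i}$ is what makes $s(\eta)$ and $m(\eta)$ well defined and integrable. I would justify it by noting that the reverse chain is a finite composition of Gaussian innovations with the network $F_{\theta^\star}$. That network is a continuous function which, for standard architectures, grows at most linearly, so each $x_{t-1}$ inherits finite second moments from $x_t$ by induction from $x_T\sim\mathcal N(0,I)$. The argument additionally needs a second moment for $\Cond(\eta)$, which holds because $\mathrm{FBP}$ is linear and the scanner noise has finite variance.

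Finally, the labels $u_{\mathrm{ale},i}$, $u_{\mathrm{diff},i}$ and $u_{\mathrm{meas},i}$ are definitions, not further claims. This matches their later use as the measured scalar that fixes the fiber in Proposition~\ref{prop:fiber}.
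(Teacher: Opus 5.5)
Your proposal is correct and uses the same argument as the paper's one-line proof, which applies the law of total variance to $r_i$ conditioning on the scanner realization $\eta$. Your tower-property derivation and explicit finite-second-moment assumption fill in details the paper leaves implicit. One small precision point for step 4: the conditional mean of $\tilde x_{0,i}$ is $x_{0,i}-m(\eta)$, a negation plus a constant shift rather than a pure sign flip, but this still leaves all variances unchanged.
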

\begin{proof}
Apply the law of total variance to $r_i$ conditioning on the scanner realization $\eta$.
\end{proof}

We define $\uale=\umeas+\udiff$ operationally as the final-image variance caused by known, resamplable channels. The scanner component is the part that can change under a repeat acquisition of the same object, while the sampler component is the part that can change when the same scan is reconstructed with another diffusion trajectory.
The final residual also admits the exact bias--variance identity
\begin{equation}
\mathbb E_{\eta,\xi}[r_i^2\mid x_0]
=b_i(x_0)^2+u_{\mathrm{ale},i},
\qquad
b_i(x_0)=\mathbb E_{\eta,\xi}[r_i\mid x_0].
\label{eq:biasvariance}
\end{equation}
This separates systematic reconstruction bias from stochastic channel uncertainty. $\uale$ is not claimed to equal the full mean squared error when $b_i\neq0$.

\subsection{Unbiased nested estimators}
Figure~\ref{fig:teacher} shows the final-image teacher. Draw $R_{\mathrm m}$ independent scanner realizations $\eta^{(m)}$. For each scan, draw $R_{\mathrm d}$ independent complete reverse trajectories $\xi^{(m,d)}$ and compute
\[
\tilde x_0^{(m,d)}=G_{\theta^\star}(\Cond^{(m)},I_0;\xi^{(m,d)}),
\qquad
r^{(m,d)}=x_0-\tilde x_0^{(m,d)}.
\]
With row means $\bar r^{(m)}=R_{\mathrm d}^{-1}\sum_d r^{(m,d)}$ and grand mean $\bar r=R_{\mathrm m}^{-1}\sum_m\bar r^{(m)}$, let us define
\begin{equation}
 u^{\mathrm{MC}}_{\mathrm{diff},i}
 =\frac{1}{R_{\mathrm m}(R_{\mathrm d}-1)}
 \sum_{m=1}^{R_{\mathrm m}}\sum_{d=1}^{R_{\mathrm d}}
 \big(r_i^{(m,d)}-\bar r_i^{(m)}\big)^2 
\label{eq:udiff}
\end{equation}
and
\begin{equation}
 u^{\mathrm{MC}}_{\mathrm{meas},i}
 =\frac{1}{R_{\mathrm m}-1}\sum_{m=1}^{R_{\mathrm m}}
 \big(\bar r_i^{(m)}-\bar r_i\big)^2
 -\frac{u^{\mathrm{MC}}_{\mathrm{diff},i}}{R_{\mathrm d}},
\qquad
u^{\mathrm{MC}}_{\mathrm{ale},i}=u^{\mathrm{MC}}_{\mathrm{meas},i}+u^{\mathrm{MC}}_{\mathrm{diff},i}.
\label{eq:umeas}
\end{equation}

\begin{proposition}[Unbiasedness]
\label{prop:unbiased}
Equations~\eqref{eq:udiff}--\eqref{eq:umeas} are unbiased for the two components of Eq.~\eqref{eq:nested}.
\end{proposition}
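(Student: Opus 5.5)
This is the standard one-way random-effects ANOVA argument, and I would carry it out pixelwise with $(x_0,I_0)$ fixed. First I would fix notation. Let $\mu_i(\eta)=\mathbb E_{\xi\mid\eta}[r_i]$ and $s_i^2(\eta)=\mathrm{Var}_{\xi\mid\eta}[r_i]$. Then, as in Proposition~\ref{prop:nested},
\[
u_{\mathrm{diff},i}=\mathbb E_\eta[s_i^2(\eta)],\qquad u_{\mathrm{meas},i}=\mathrm{Var}_\eta[\mu_i(\eta)].
\]
I will use the sampling design explicitly: the $\eta^{(m)}$ are i.i.d., and given $\eta^{(m)}$ the $\xi^{(m,d)}$ are i.i.d. and independent across $m$. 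Hence, conditional on $\eta^{(m)}$, the residuals $r_i^{(m,d)}$, $d=1,\dots,R_{\mathrm d}$, are i.i.d. with mean $\mu_i(\eta^{(m)})$ and variance $s_i^2(\eta^{(m)})$. I assume finite second moments, and $R_{\mathrm m},R_{\mathrm d}\ge 2$.

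\emph{Step 1: the sampler component.} For each row $m$, the within-row sample variance $\frac{1}{R_{\mathrm d}-1}\sum_d (r_i^{(m,d)}-\bar r_i^{(m)})^2$ is, conditionally on $\eta^{(m)}$, the usual unbiased estimator. Its conditional expectation is therefore $s_i^2(\eta^{(m)})$. Taking the outer expectation over $\eta^{(m)}$ via the tower property gives $u_{\mathrm{diff},i}$. Averaging over $m$ then shows that Eq.~\eqref{eq:udiff} is unbiased for $u_{\mathrm{diff},i}$.

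\emph{Step 2: the row means.} I would show that the row means $\bar r_i^{(m)}$ are i.i.d. across $m$, since each is a function of the independent block $(\eta^{(m)},\xi^{(m,\cdot)})$. Their common variance comes from the law of total variance applied to $\bar r_i^{(m)}$ conditioning on $\eta^{(m)}$:
\[
\mathrm{Var}[\bar r_i^{(m)}]=\mathrm{Var}_\eta[\mu_i(\eta)]+\mathbb E_\eta\!\left[s_i^2(\eta)/R_{\mathrm d}\right]=u_{\mathrm{meas},i}+u_{\mathrm{diff},i}/R_{\mathrm d}.
\]
Because the row means are i.i.d., the between-row sample variance $\frac{1}{R_{\mathrm m}-1}\sum_m(\bar r_i^{(m)}-\bar r_i)^2$ is unbiased for this quantity.

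\emph{Step 3: the scanner component and the total.} Subtracting $u^{\mathrm{MC}}_{\mathrm{diff},i}/R_{\mathrm d}$ removes the $u_{\mathrm{diff},i}/R_{\mathrm d}$ term in expectation, using Step~1 and linearity of expectation. Hence Eq.~\eqref{eq:umeas} is unbiased for $u_{\mathrm{meas},i}$. By linearity again, $u^{\mathrm{MC}}_{\mathrm{ale},i}$ is unbiased for $u_{\mathrm{ale},i}$ as given by Proposition~\ref{prop:nested}.

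The only point needing care is Step~2. I must check that the i.i.d.\ structure of the row means holds exactly. This requires the independence of fresh trajectories across scans, as in the stated design, and not merely identical marginal distributions. I must also account for the inflation term $u_{\mathrm{diff},i}/R_{\mathrm d}$ correctly. The statement claims only unbiasedness, so I will not argue nonnegativity: $u^{\mathrm{MC}}_{\mathrm{meas},i}$ can be negative in a finite sample.
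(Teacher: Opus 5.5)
Your proposal is correct and follows the paper's own argument. The paper averages the conditionally unbiased within-row sample variances to estimate $\udiff$, notes that each row mean has variance $\umeas+\udiff/R_{\mathrm d}$, and subtracts $u^{\mathrm{MC}}_{\mathrm{diff}}/R_{\mathrm d}$ from the between-row sample variance; you just spell out the independence structure and the conditional law-of-total-variance step that its two-line proof leaves implicit.
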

\begin{proof}
Equation~\eqref{eq:udiff} averages unbiased within-scan sample variances. The variance of each row mean is $\umeas+\udiff/R_{\mathrm d}$, so subtracting $u^{\mathrm{MC}}_{\mathrm{diff}}/R_{\mathrm d}$ from the between-row sample variance removes the finite-$R_{\mathrm d}$ contribution.
\end{proof}

Small negative estimates of $u^{\mathrm{MC}}_{\mathrm{meas}}$ could occur due to finite-sample Monte Carlo fluctuation. For nonnegative training targets we clip these values at zero and report the clipping frequency, since a high frequency is treated as evidence that $R_{\mathrm d}$ or $R_{\mathrm m}$ is inadequate.

\begin{figure}[t]
\centering
\includegraphics[width=\linewidth]{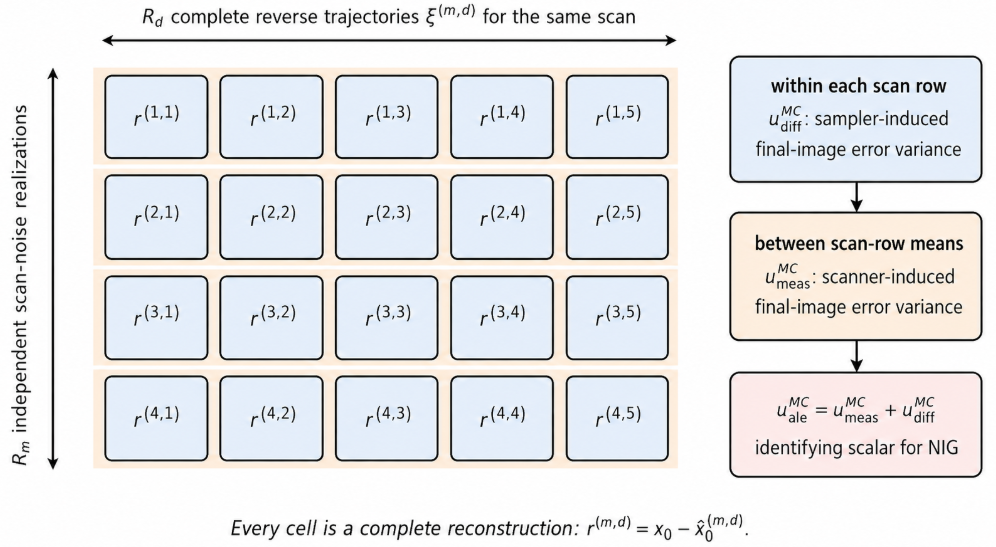}
\caption{Nested teacher for estimation of aleatoric uncertainty. Each row fixes one scan and varies the entire diffusion trajectory, while different rows use independent scans of the same object. Every cell is a complete final reconstruction and therefore has an image residual $r^{(m,d)}=x_0-\tilde x_0^{(m,d)}$. Within-row variation estimates $\udiff$ and corrected between-row variation estimates $\umeas$.}
\label{fig:teacher}
\end{figure}

\subsection{Image NIG identification}
\begin{proposition}[Identification]
\label{prop:recover}
If $(\alpha_i,c_i)$ and an independently obtained $u_{\mathrm{ale},i}$ are known, with
\begin{equation}
0<u_{\mathrm{ale},i}<\frac{c_i}{\alpha_i-1},
\label{eq:admissible}
\end{equation}
then the NIG representative on the fiber of Eq.~\eqref{eq:fiber} is unique:
\begin{equation}
\beta_i=(\alpha_i-1)u_{\mathrm{ale},i},
\qquad
\nu_i=\frac{u_{\mathrm{ale},i}}{c_i/(\alpha_i-1)-u_{\mathrm{ale},i}}.
\label{eq:recover}
\end{equation}
\end{proposition}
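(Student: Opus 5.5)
The plan is to work directly in the fiber parametrization of Proposition~\ref{prop:fiber}, where the likelihood fixes $(\alpha_i,c_i)$ and leaves only $\nu_i>0$ free, with $\beta_i(\nu_i)=c_i\nu_i/(1+\nu_i)$. Along this fiber, the aleatoric variance from Eq.~\eqref{eq:decomp} becomes a function of one variable,
\[
u_{\mathrm{ale},i}(\nu_i)=\frac{\beta_i(\nu_i)}{\alpha_i-1}=\frac{c_i}{\alpha_i-1}\cdot\frac{\nu_i}{1+\nu_i}.
\]
Identification then reduces to showing that this scalar map is a bijection from $(0,\infty)$ onto the admissible interval in Eq.~\eqref{eq:admissible}, and inverting it explicitly.

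The steps, in order, are as follows.

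\textbf{Monotonicity.} Since $\alpha_i>1$ and $c_i>0$ (the latter because $c_i=\beta_i(1+1/\nu_i)$ with $\beta_i,\nu_i>0$), the factor $c_i/(\alpha_i-1)$ is a positive constant. The map $\nu\mapsto\nu/(1+\nu)$ is continuous and strictly increasing on $(0,\infty)$, with limits $0$ as $\nu\to0^+$ and $1$ as $\nu\to\infty$. Hence $u_{\mathrm{ale},i}(\cdot)$ is a strictly increasing bijection of $(0,\infty)$ onto $\bigl(0,c_i/(\alpha_i-1)\bigr)$. This is exactly $(0,V_{\mathrm{pred},i})$, so condition \eqref{eq:admissible} is both necessary and sufficient for a solution to exist. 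Strict monotonicity gives uniqueness.

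\textbf{Inversion.} Write $V=c_i/(\alpha_i-1)$. Solving $u=V\nu/(1+\nu)$ for $\nu$ gives $\nu=u/(V-u)$, which is the formula for $\nu_i$ in Eq.~\eqref{eq:recover}. It is positive precisely under \eqref{eq:admissible}. Substituting back gives $\beta_i=(\alpha_i-1)u_{\mathrm{ale},i}$. That value can also be read off directly from the definition of $u_{\mathrm{ale},i}$, since $\alpha_i$ is fixed.

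\textbf{Consistency.} Finally, I will check that the recovered pair actually lies on the fiber: $\beta_i(1+1/\nu_i)=c_i$. This follows because $1+1/\nu_i=V/u$, so $\beta_i(1+1/\nu_i)=(\alpha_i-1)u\cdot V/u=(\alpha_i-1)V=c_i$. By Proposition~\ref{prop:fiber}, the recovered representative therefore reproduces the same marginal likelihood.

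None of these steps is a real obstacle; the argument is elementary algebra. The only points needing care are the two strict inequalities. First, one needs $c_i>0$ and $\alpha_i>1$ so that $V$ is finite and positive. Second, the endpoint cases must be excluded: $u_{\mathrm{ale},i}=0$ would force $\nu_i=0$, and $u_{\mathrm{ale},i}=V$ would correspond to $\nu_i=\infty$. Both lie outside the NIG parameter space, which is why \eqref{eq:admissible} is stated with strict inequalities.
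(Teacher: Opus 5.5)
Your proposal is correct and follows essentially the same route as the paper. The paper states this proposition without a separate proof and relies on the same algebraic inversion of $u_{\mathrm{ale},i}=\beta_i/(\alpha_i-1)$ and $c_i=\beta_i(1+1/\nu_i)$ along the fiber of Eq.~\eqref{eq:fiber}. Your monotonicity argument also makes explicit, which the paper does not, that the admissibility condition \eqref{eq:admissible} is necessary as well as sufficient for a solution with $\nu_i\in(0,\infty)$.
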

No KL term, reference prior, or evidence regularizer appears in Eq.~\eqref{eq:recover}. The two measured components $\umeas$ and $\udiff$ provide more information than the one-dimensional NIG fiber, and their sum identifies the NIG split.

\subsection{Sequential training and inference}
\label{sec:stages}
Figure~\ref{fig:workflow} illustrates the training and inference processes respectively. These can be also presented as four stages in a sequential order. Further details are as follows.
\begin{figure}[th]
\centering
\includegraphics[width=\linewidth]{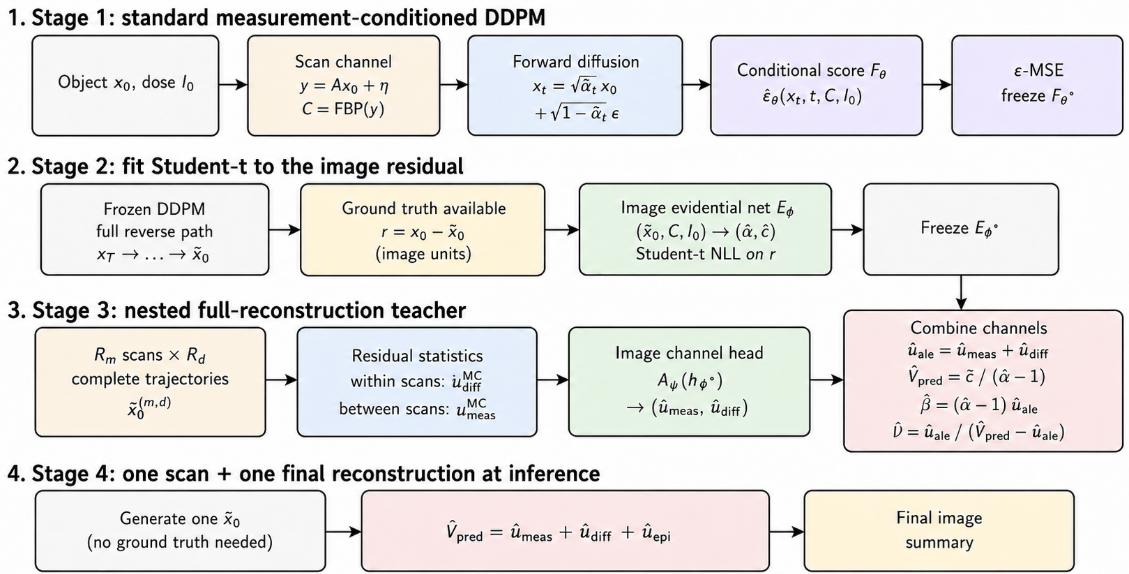}
\caption{PEEL-DDPM workflow. Stage~1 trains a conventional measurement-conditioned DDPM by $\varepsilon$-MSE and freezes it. Stage~2 runs complete reverse trajectories and fits the identifiable Student-$t$ coordinates $(\alpha,c)$ to the image residual $r=x_0-\tilde x_0$. Stage~3 resamples both the scanner and the complete diffusion trajectory to measure $\umeas$ and $\udiff$, then trains an image channel head. Stage~4 combines the learned likelihood coordinates with $\uale=\umeas+\udiff$ to recover $(\beta,\nu)$ and report an image uncertainty summary.}
\label{fig:workflow}
\end{figure}

\paragraph{Stage 1: standard conditional DDPM.}
Train $F_\theta$ only by Eq.~\eqref{eq:scoreloss}, using random $t$ and forward noise $\varepsilon$, and freeze $\theta^\star$. No evidential loss is present in the DDPM training.

\paragraph{Stage 2: Student-$t$ fitting of final image residuals.}
For a training object and scan, sample a complete trajectory and obtain $\tilde x_0$. A final-image evidential network $E_\phi$ receives $(\tilde x_0,\Cond,I_0)$ and returns shared features $h_\phi$ and
\begin{equation}
\hat\alpha=1+\operatorname{softplus}(W_\alpha\!\ast h_\phi)+\epsilon_\alpha,
\qquad
\hat c=\hat\alpha\big[\operatorname{softplus}(W_c\!\ast h_\phi)+\epsilon_0\big],
\label{eq:heads}
\end{equation}
where $\epsilon_\alpha=10^{-4}$ and $\epsilon_0=10^{-8}$. With the actual image residual $r=x_0-\tilde x_0$, the pixelwise NLL is
\begin{equation}
\ell^{\mathrm{NLL}}
=\log\Gamma(\hat\alpha)-\log\Gamma\!\left(\hat\alpha+\tfrac12\right)
+\tfrac12\log(2\pi\hat c)
+\left(\hat\alpha+\tfrac12\right)
\log\!\left(1+\frac{r^2}{2\hat c}\right).
\label{eq:nllpixel}
\end{equation}
Only $(\alpha,c)$ are learned, since the location is the already generated image. After convergence, $E_{\phi^\star}$ is frozen.

\paragraph{Stage 3: nested physical teacher and final-image aleatoric head.}
Build $u^{\mathrm{MC}}_{\mathrm{meas}}$ and $u^{\mathrm{MC}}_{\mathrm{diff}}$ by Eqs.~\eqref{eq:udiff}--\eqref{eq:umeas}, and form $u^{\mathrm{MC}}_{\mathrm{ale}}=u^{\mathrm{MC}}_{\mathrm{meas}}+u^{\mathrm{MC}}_{\mathrm{diff}}$. For a single scan of an object and the resultant final reconstruction, we compute frozen features $h_{\phi^\star}$ and train the positive identifying head
\[
\hat u_{\mathrm{ale}}=A_\psi(h_{\phi^\star})>0.
\]
The primary loss is
\begin{equation}
\mathcal J_{\mathrm{ale}}(\psi)=
\mathbb E\frac1{|\Omega|}\sum_{i\in\Omega}
\left[
\log(\hat u_{\mathrm{ale},i}+\epsilon_0)-
\log(u^{\mathrm{MC}}_{\mathrm{ale},i}+\epsilon_0)
\right]^2.
\label{eq:chloss}
\end{equation}
Only $\psi$ is updated. Auxiliary heads may be trained against the two measured components separately when a scanner/sampler decomposition is desired. The minimal feasibility experiment below evaluates only the total identifying coordinate $\uale$. The teacher repeats are never available to the predictor at inference, because many realizations teach the mapping for a fresh realization to use the mapping.

\paragraph{Stage 4: single-reconstruction inference.}
Given one scan, run one full reverse trajectory to obtain $\tilde x_0$. Evaluate $(\hat\alpha,\hat c)$ and $\hat\uale$ from the final image, and form
\begin{equation}
\hat\Vpred=\frac{\hat c}{\hat\alpha-1},
\qquad
\hat\uepi=\hat\Vpred-\hat\uale.
\label{eq:infer1}
\end{equation}
Where Eq.~\eqref{eq:admissible} holds, we can recover $(\hat\beta,\hat\nu)$ by Eq.~\eqref{eq:recover}. The minimal identified report is therefore
\begin{equation}
\hat\Vpred=\hat\uale+\hat\uepi,
\qquad
\uale=\umeas+\udiff
\label{eq:report}
\end{equation}
at the final reconstructed image. If auxiliary heads are trained, $\hat\uale$ may additionally be displayed as the repeat-scan and repeat-sampler contributions, but the NIG identification itself requires only their total.

\section{Experimental Design and Results}
\label{sec:design}
We first report one deliberately reduced experiment whose sole purpose is to test whether our PEEL-DDPM formulation is computationally and statistically viable. It is not the full validation study, since it uses one dose, fewer objects, and a shorter diffusion chain, and its independent Monte Carlo reference is also preliminary.

\subsection{Experimental design}
\label{sec:pilot}

The object family matches PEEL \citep{wang2026peel} but only the middle dose $I_0=3.50\times10^4$. Each $32\times32$ object contains a random body ellipse with attenuation in $[0.017,0.022]$, four to seven internal ellipses with signed contrast in $[-0.005,0.006]$, and one to three small high-contrast lesions with added attenuation in $[0.006,0.011]$, clipped to $[0,0.035]$ and mapped affinely to $[-1,1]$. Objects, not noise or trajectory realizations, are separated across splits.

Images were $32\times32$ with 36 parallel-beam views. We used 360 training objects, 40 validation objects, and eight held-out test objects. To keep the experiment small, the DDPM used $T=30$ cosine-schedule steps. The score model was trained for 1000 optimization iterations and then frozen. The final-image Student-$t$ network was trained from two independent final reconstructions per training/validation object for 800 iterations. The single-image aleatoric head was trained for 2000 iterations against nested teachers using $R_{\mathrm m}\times R_{\mathrm d}=12\times4$ complete reconstructions per teacher object. Test references were generated independently using $24\times6$ complete reconstructions per held-out object. All residual-modeling quantities were computed after affine normalization of the image range to $[-1,1]$. The reconstruction RMSE below is reported in the original attenuation units.

The measurement model is Poisson counts $\mathrm{Poisson}(I_0e^{-(Ax_0)_j})$ plus additive Gaussian electronic noise of standard deviation $\sigma_e$, floored at $0.5$ counts before the logarithm. Stage~1 uses AdamW with batch size 64, initial learning rate $10^{-3}$ with cosine decay, weight decay $10^{-5}$, and gradient-norm clipping at 5. Stages~2 and 3 use the same optimizer with independently selected validation checkpoints. Final reconstructions used in Stage~2 are generated only after Stage~1 is frozen. Teacher and evaluation trajectories are disjoint from those used to train the Student-$t$ head.

\subsection{Representative results}

The first feasibility question is whether the Student-$t$ model fitted to the actual final-image residual behaves like a calibrated image-error distribution on unseen objects. It did in this pilot: empirical coverage tracked nominal coverage closely from 50\% through 95\%, and the mean squared held-out residual was 96.7\% of the mean predicted variance. Thus, moving the evidential endpoint from $\varepsilon-\hat\varepsilon$ to $x_0-\tilde x_0$ not only produced a more interpretable variable but also yielded a quantitatively coherent predictive distribution for the actual reconstruction error.

The second question is whether the physically measured final-image aleatoric field can be inferred from one scan and one complete reverse trajectory. The one-image head attained pooled $\rho=0.785$ and median within-image $\rho=0.723$ against the independent $24\times6$ nested reference. The field was also relevant to the realized image error: pixels in the highest predicted-$\uale$ quintile had 3.82 times the mean squared residual of pixels in the lowest quintile, while the corresponding ratio for total Student-$t$ predictive variance was 4.89. Figure~\ref{fig:pilotresult} shows a representative held-out case selected by within-image correlation nearest the median. Fig.~\ref{fig:pilotcal} shows interval calibration and error stratification.
\begin{figure}[th]
\centering
\includegraphics[width=\linewidth]{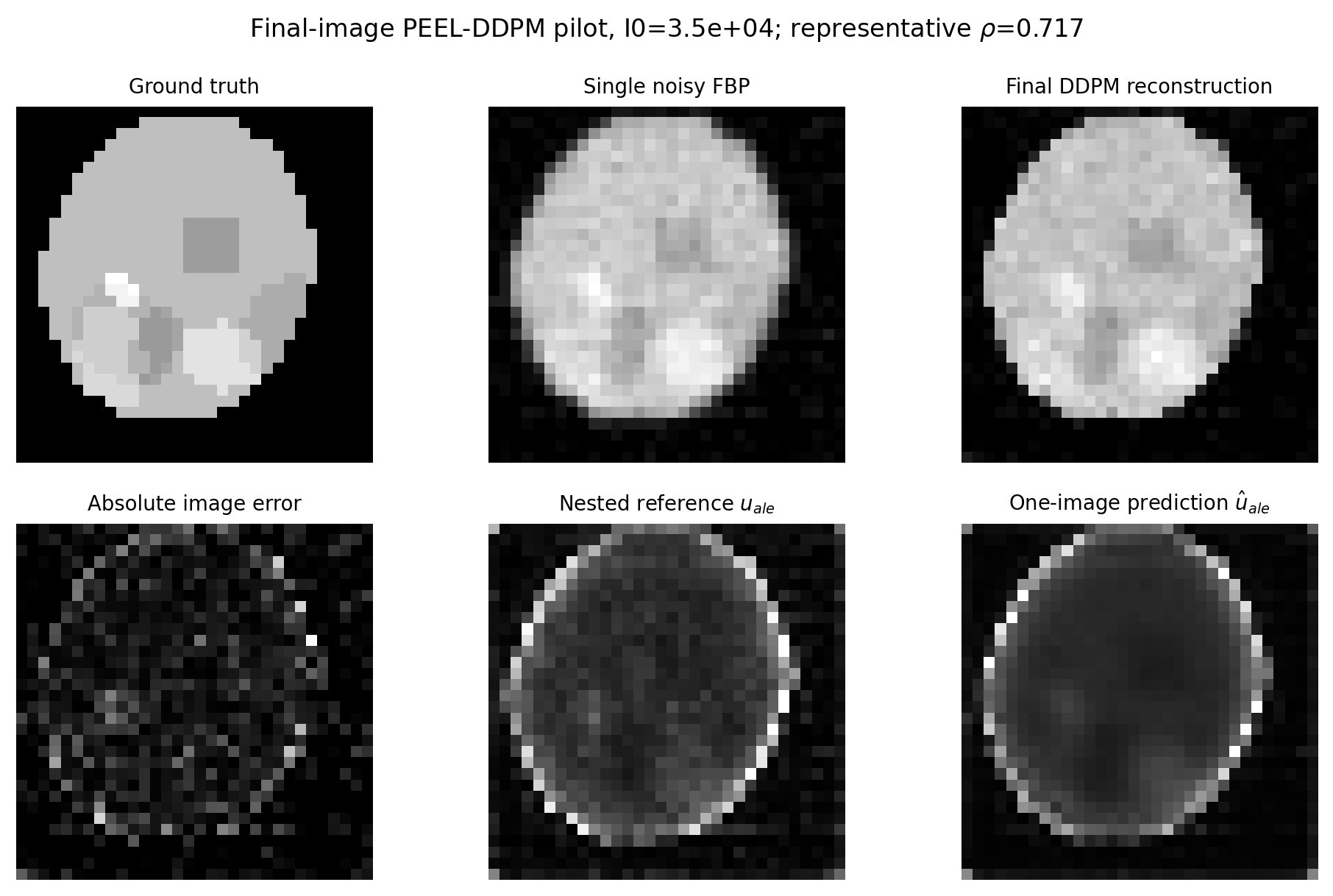}
\caption{Representative held-out final-image feasibility case. The nested $u_{\rm ale}$ reference is computed from independent repeated scans and complete reverse trajectories. The prediction uses only one scan and one final reconstruction. The displayed case has within-image Spearman correlation $\rho=0.717$, close to the test-set median. The uncertainty panels share common display limits.}
\label{fig:pilotresult}
\end{figure}
\begin{figure}[th]
\centering
\includegraphics[width=\linewidth]{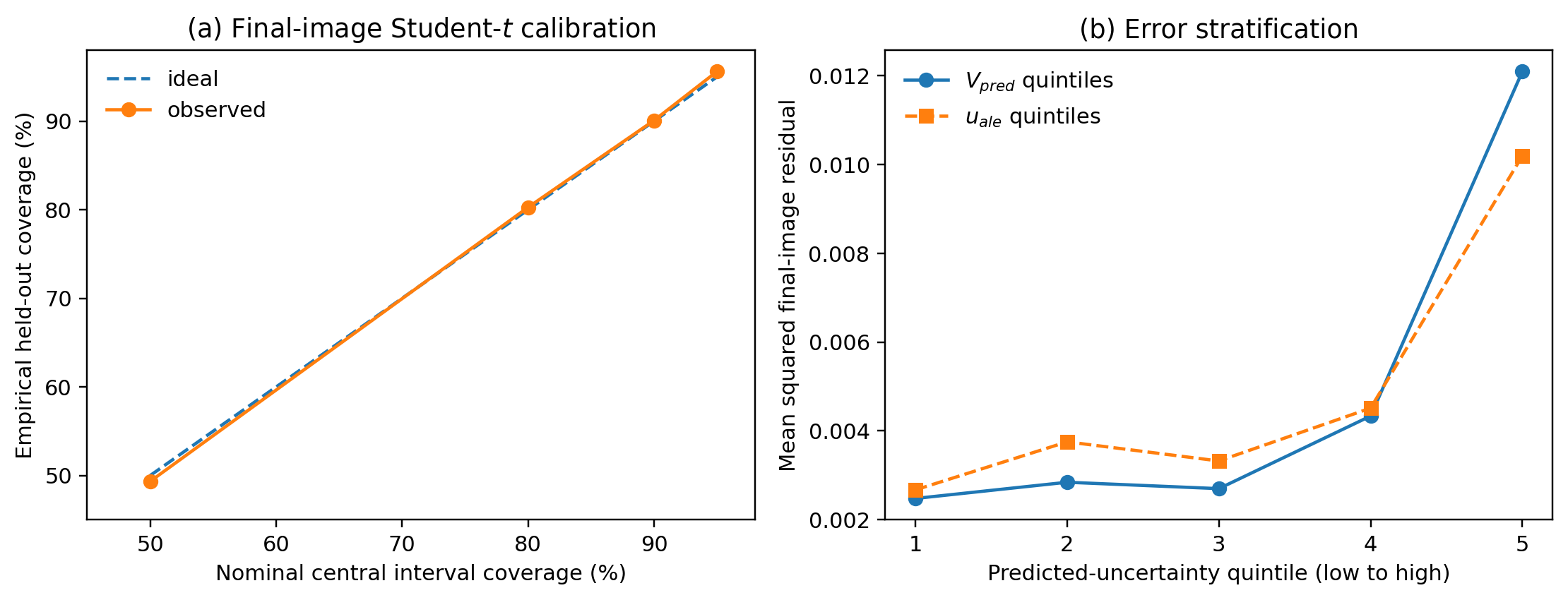}
\caption{Held-out image-error behavior in the feasibility study. (a) Empirical coverage of central Student-$t$ intervals closely follows nominal coverage, and (b) mean squared final-image residual increases strongly in the highest uncertainty quintile. The highest-to-lowest ratios are 4.89 for total predictive variance and 3.82 for the independently trained aleatoric coordinate.}
\label{fig:pilotcal}
\end{figure}

The nested reference remains the limiting factor. Its pooled split-half correlation was only $0.497$. This means that the $u_{\rm ale}$ correlations in Table~\ref{tab:pilot} should not be interpreted as final estimates. At the same time, the reference was not spatially structureless: the median within-image coefficients of variation were $0.384$ for $\umeas$ and $0.408$ for $\udiff$, the mean scanner fraction was $40.3\%$, and only $3.7\%$ of raw scanner-component estimates were negative before clipping. Taken together, these observations support feasibility while motivating a larger repeated-reconstruction budget for definitive validation.
\begin{table}[h]
\centering
\caption{One-dose feasibility experiment for the final-image formulation. Correlations against the nested reference are exploratory, with split-half reliability of $0.497$.}
\label{tab:pilot}
\begin{tabular}{lc}
\toprule
Quantity & Pilot result\\
\midrule
Final reconstruction RMSE & $1.224\times10^{-3}\pm6.32\times10^{-5}$\\
Final reconstruction SSIM & $0.891\pm0.015$\\
Student-$t$ coverage, nominal $50/80/90/95\%$ & $49.3/80.3/90.1/95.6\%$\\
$\mathbb E[r^2]/\mathbb E[\hat V_{\rm pred}]$ & $0.967$\\
$\hat u_{\rm ale}$ pooled Spearman $\rho$ & $0.785$\\
Median within-image $\rho$ (range) & $0.723$ ($0.504$--$0.852$)\\
Case-mean $\rho$ & $0.976$\\
Nested-reference split-half $\rho$ & $0.497$\\
Admissible fraction $0<\hat u_{\rm ale}<\hat V_{\rm pred}$ & $99.30\%$\\
Highest/lowest uncertainty-quintile MSE ratio, $\hat u_{\rm ale}$ & $3.82$\\
Highest/lowest uncertainty-quintile MSE ratio, $\hat V_{\rm pred}$ & $4.89$\\
Mean scanner fraction $\umeas/(\umeas+\udiff)$ & $0.403$\\
\bottomrule
\end{tabular}
\end{table}
\subsection{Sizing the repeated final-reconstruction}
The final-image formulation is more expensive than a per-step teacher because each Monte Carlo cell requires a complete reverse trajectory. We therefore size the repeat budget empirically rather than fixing it by convenience. A pilot on five held-out objects at each dose starts with $R_{\mathrm m}=16$ scans and $R_{\mathrm d}=4$ trajectories per scan. For an ordinary variance estimate from $R$ independent realizations, the Gaussian relative standard error is approximately
\begin{equation}
f(R)=\sqrt{\frac{2}{R-1}},
\label{eq:rse}
\end{equation}
which provides a first-order noise floor for the total final-image variance. The nested components are assessed more directly by bootstrap resampling of scan rows and trajectories within rows.

\FloatBarrier
\subsection{Additional multi-dose validation}
\label{sec:multidose}

We next tested the physical channel decomposition across the five planned dose levels
$I_0\in\{1.50,2.25,3.50,5.25,8.00\}\times10^4$ using the same eight held-out objects.
To isolate whether the decomposition itself responds to measurement physics, the DDPM trained at the middle dose was frozen and applied without retraining.  At each dose and object, an independent $R_{\mathrm m}\times R_{\mathrm d}=24\times6$ nested experiment was generated.  Thus, this experiment is a dose-stress test of the measured channels, rather than a fully retrained multi-dose prediction study.

The scanner-induced component decreased strongly with dose, whereas the sampler-induced component remained nearly constant (Table~\ref{tab:dose}).  A log--log fit gave slope $-1.149$ ($R^2=0.999$) for $u_{\mathrm{meas}}$ and slope $-0.006$ for $u_{\mathrm{diff}}$.  The mean scanner fraction $u_{\mathrm{meas}}/(u_{\mathrm{meas}}+u_{\mathrm{diff}})$ consequently decreased from $0.650$ at $I_0=1.50\times10^4$ to $0.215$ at $I_0=8.00\times10^4$.  This separation is consistent with the intended interpretation: acquisition noise follows dose, whereas reverse-process sampling variability is largely algorithmic and dose-insensitive in this frozen-model test.

\begin{figure}[h]
\centering
\includegraphics[width=0.58\linewidth]{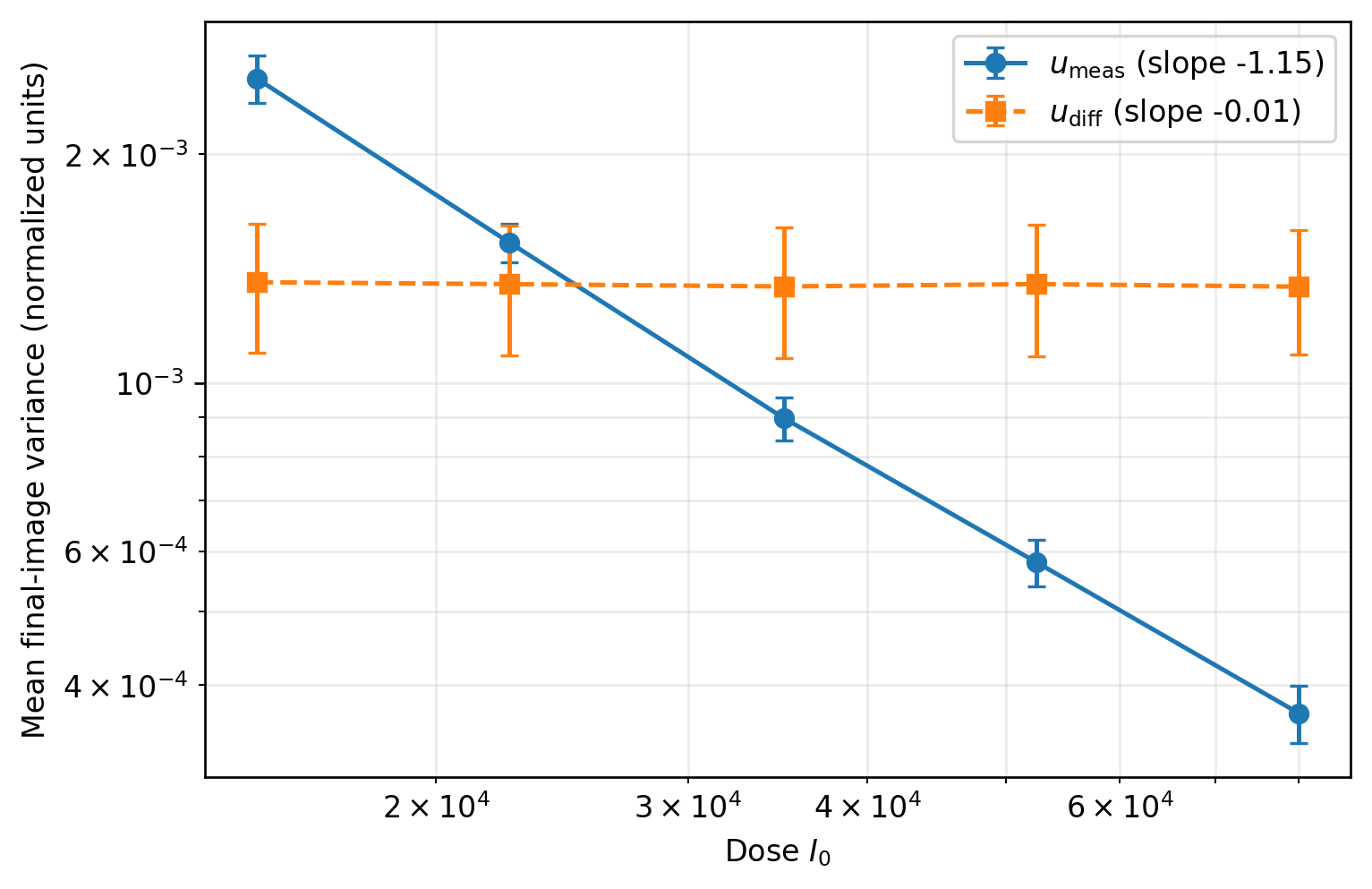}
\caption{Multi-dose stress test of the final-image variance decomposition using the frozen middle-dose DDPM. Points show means over eight held-out objects and error bars show between-object standard deviations. The fitted log--log slopes are $-1.15$ for $u_{\mathrm{meas}}$ and $-0.01$ for $u_{\mathrm{diff}}$.}
\label{fig:dosescaling}
\end{figure}

\begin{table}[h]
\centering
\caption{Measured final-image variance components across dose. Variances are in normalized image units and are averaged over eight held-out objects.}
\label{tab:dose}
\begin{tabular}{cccc}
\toprule
$I_0$ & mean $u_{\mathrm{meas}}$ & mean $u_{\mathrm{diff}}$ & scanner fraction\\
\midrule
$1.50\times10^4$ & $2.516\times10^{-3}$ & $1.357\times10^{-3}$ & $0.650$\\
$2.25\times10^4$ & $1.531\times10^{-3}$ & $1.349\times10^{-3}$ & $0.532$\\
$3.50\times10^4$ & $0.898\times10^{-3}$ & $1.339\times10^{-3}$ & $0.401$\\
$5.25\times10^4$ & $0.581\times10^{-3}$ & $1.349\times10^{-3}$ & $0.301$\\
$8.00\times10^4$ & $0.368\times10^{-3}$ & $1.339\times10^{-3}$ & $0.215$\\
\bottomrule
\end{tabular}
\end{table}

We also increased the middle-dose reference budget from $R_{\mathrm m}\times R_{\mathrm d}=24\times6$ to $96\times6$ for the eight held-out objects.  The median within-image split-half Spearman reliability increased from $0.449$ to $0.750$ (high-budget range $0.540$--$0.854$), while the fraction of raw negative scanner-component estimates decreased from $3.83\%$ to $1.05\%$.  The larger budget therefore improves reference stability, but it still does not meet the prospective $0.90$ criterion; reference-based spatial correlations remain exploratory.

\FloatBarrier
\subsection{Prospective full validation}
A definitive study will retrain the conditional DDPM jointly across the five dose levels using the planned $600/100/30$ training/validation/test object split and $T=200$ diffusion steps.  The nested reference budget will be increased adaptively until split-half reliability approaches $0.90$, and the principal uncertainty-prediction results will be repeated across multiple training seeds.  These experiments will distinguish full multi-dose predictive calibration from the physical-channel stress test reported above.

\FloatBarrier
\section{Discussion and Conclusion}
\label{sec:disc}

This pilot study provides direct support for the principle of PEEL-DDPM. Without any image-domain correction of the frozen DDPM output, the held-out Student-$t$ intervals were close to nominal calibration across four coverage levels, and the mean predicted variance closely matched the mean squared final-image residual. In addition, both $\hat V_{\rm pred}$ and the independently learned $\hat\uale$ stratified actual squared image error strongly.

The additional dose-stress experiment strengthens the physical interpretation of the decomposition: $u_{\mathrm{meas}}$ followed an approximately inverse-dose law (slope $-1.15$), whereas $u_{\mathrm{diff}}$ was essentially dose invariant (slope $-0.01$). Increasing the reference budget from $24\times6$ to $96\times6$ increased median within-image split-half reliability from $0.449$ to $0.750$, but the result remained below the pre-specified $0.90$ target.

The final-image endpoint also clarifies the physical decomposition. For a fixed object, repeated scans and repeated diffusion trajectories create repeated final reconstruction errors. Their total variance is exactly the final-image aleatoric coordinate under the PEEL operational definition, and the nested design separates what a new acquisition would change from what a new sampling trajectory would change. The first quantity is directly tied to measurement support, while the second reflects stochasticity of the reconstruction procedure and may be reduced by averaging or using a deterministic sampler.

The NIG identifiability argument remains unchanged in its essential form. The Student-$t$ likelihood fitted to the real image residual learns $(\alpha,c)$ but cannot determine $(\beta,\nu)$ separately. The nested physical experiment supplies the missing scalar $\uale$. Because the fiber is one-dimensional, this is exactly enough to recover a unique representative. The finer pair $(\umeas,\udiff)$ is measured directly and reported alongside the algebraically identified split.

The PEEL-DDPM formulation enjoys a clean decomposition. Student-$t$ NLL never trains the DDPM score location. The generator is optimized only by the standard $\varepsilon$-MSE objective and then frozen, while the evidential model sees its final image as a fixed location. Consequently, robust down-weighting in the Student-$t$ loss can change uncertainty parameters but cannot alter the generated image or the reverse score.

Several limitations remain. The full-path teacher is computationally expensive. The reported feasibility experiment and the frozen-model multi-dose stress test are still preliminary, so they can establish mechanism but not clinical calibration or robustness.  If the reconstruction has substantial systematic bias, interval centering can fail even when the variance is well learned. Equation~\eqref{eq:biasvariance}, the coverage test, and the location-calibration ablation can be used to expose failure modes. Finally, $\uepi$ remains an operational NIG remainder after the physically measured channel variance is removed.

In summary, PEEL-DDPM has been reformulated so that evidential learning answers the question that matters at the end of reconstruction: \emph{how uncertain is the generated image itself?} While standard DDPM training handles the internal noise-prediction problem, final-image Student-$t$ fitting models the real reconstruction error, and repeated physical and algorithmic channels identify the NIG uncertainty split. This PEEL-DDPM framework keeps image generation, error modeling, and physical identification distinct while integrating them into a unified approach for image reconstruction and uncertainty quantification.

\medskip
\paragraph{Author--AI Collaboration.}
Generative AI tools, including ChatGPT and Claude, were used to assist with formulation, drafting, simulation, and cross-checking. The author conceived the core idea and overall framework, directed and evaluated the AI-assisted work, and takes full responsibility for the content of this paper.

\end{document}